\documentclass{article} 
\usepackage[final]{colm2026_conference}

\usepackage{microtype}
\usepackage{xcolor}
\usepackage{hyperref}
\usepackage{url}
\usepackage{booktabs}

\usepackage{amsmath}
\usepackage{amssymb}
\usepackage{mathtools}
\usepackage{amsthm}
\usepackage{pifont}

\usepackage[capitalize,noabbrev]{cleveref}

\usepackage{algorithm}
\usepackage{algorithmic}

\theoremstyle{plain}
\newtheorem{theorem}{Theorem}[section]

\newtheorem{lemma}[theorem]{Lemma}

\theoremstyle{definition}

\theoremstyle{remark}
\newtheorem{remark}[theorem]{Remark}

\newcommand{\name}{Online Reasoning Calibration}
\newcommand{\abbr}{ORCA}

\usepackage{lineno}

\definecolor{darkblue}{rgb}{0, 0, 0.5}
\hypersetup{colorlinks=true, citecolor=darkblue, linkcolor=darkblue, urlcolor=darkblue}

\title{\name: Test-Time Training Enables Generalizable Conformal LLM Reasoning}

\author{Cai Zhou\thanks{Equal Contribution.} $^{\ 123}$ \ Zekai Wang$^{*13}$ \ Menghua Wu$^{12}$ \ Qianyu Julie Zhu$^{34}$ \ Flora C. Shi$^{13}$\\\textbf{Chenyu Wang}$^{12}$ \ \textbf{Ashia Wilson}$^{13}$ \ \textbf{Tommi Jaakkola}$^{12}$ \ \textbf{Stephen Bates}$^{13}$\\
$^1$Department of Electrical Engineering and Computer Science (MIT EECS)\\$^2$Computer Science and Artificial Intelligence Laboratory (MIT CSAIL) \\$^3$Laboratory for Information and Decision Systems (MIT LIDS)\\$^4$Computational Science and Engineering (MIT CSE)\\
Massachusetts Institute of Technology\\
\texttt{\{caiz428,zekai\}@mit.edu}
}

\begin{document}

\ifcolmsubmission
\linenumbers
\fi

\maketitle

\begin{abstract}
While test-time scaling has advanced the performance of large language models, state-of-the-art results impose a heavy computational burden.
Better dynamic allocation of test-time compute would help mitigate this cost, but one key challenge is the lack of calibration in post-trained language models.
Here, we present \name~(\abbr), a technique for calibrating the sampling process that draws upon conformal prediction and test-time training.
Specifically, we introduce a meta-learning procedure that updates the calibration module for each input.
This allows us to provide valid confidence estimates under distributional shift, such as that in thought patterns across different stages of reasoning, or in prompt distributions between model development and deployment.
\abbr~not only provides theoretical guarantees on conformal risks, but also empirically shows higher efficiency and generalization across different reasoning tasks. At risk level $\delta{=}0.1$, \abbr~improves Qwen2.5-32B efficiency on in-distribution tasks with savings up to 47.5\% with supervised labels and 40.7\% with self-consistency labels. Under zero-shot out-of-domain settings, it improves MATH-500 savings from 24.8\% of the static calibration baseline to 67.0\% while maintaining a low empirical error rate, and the same trend holds across model families and downstream benchmarks.
Our code is publicly available at \href{https://github.com/wzekai/ORCA}{https://github.com/wzekai/ORCA}.
\end{abstract}

\section{Introduction}
Large language models have solved increasingly complex problems, such as olympiad mathematics and software engineering tasks, by vastly scaling test-time compute.
Strategies such as parallel sampling~\citep{qi2025learning}, sequential Monte Carlo~\citep{feng2025stepbystep}, Monte Carlo tree search~\citep{zhang2024rest}, verifier-guided sampling~\citep{yu2025scaling}, and self-consistency~\citep{xie2024calibrating,huang2025efficient} are key to eliciting advanced reasoning capabilities from LLMs, but they are often subject to efficiency and reliability bottlenecks. 
Post-trained LLMs are known to be \emph{miscalibrated} about when their intermediate reasoning states and final answers are correct~\citep{li2025miscalibrated}.
As a result, test-time scaling strategies often involve handcrafted parameters that balance sample quality and resource allocation -- e.g. the number of parallel rollouts, or the verifier's prompt -- heuristics which are vulnerable to reward hacking and distribution shift~\citep{snell2024scaling}.
This work addresses these challenges through a principled approach.
We aim to (i) support \emph{adaptive compute allocation} based on task difficulty, while providing \emph{statistical guarantees} on sample quality and efficiency at test time, and (ii) remain robust under \emph{distribution shift}, as the prompt distribution at deployment may differ from that of model development.

Conformal prediction~\citep{shafer2008tutorial,angelopoulos2021gentle} and calibration methods provide finite-sample coverage guarantees, and they may provide confidence estimates of whether a set of LLM outputs contains the correct answer.
In the context of test-time scaling, these methods are used to limit the number of tokens or examples that must be sampled, while still ensuring a high quality response.
For example, \emph{Thought Calibration}~\citep{wu2025thoughtcal} among others~\citep{xie2025statistical,wang2026conformal} formulates inference efficiency as a risk-controlled stopping problem, and it produces calibrated thresholds of a probe that controls risks with early stopping.
However, these methods assume a fixed inference procedure, whose dynamics are uniform over the course of sampling.
That is, they do not address the validity of confidence estimates under distribution shifts at two levels. On the sample level, reasoning patterns may vary across different positions within a long chain-of-thought (CoT).
On the dataset level, models are often deployed in out-of-distribution (OOD) circumstances that have not been seen during training.

In an orthogonal direction, \emph{Test-Time Training (TTT)}~\citep{sun2020test,sun2024ttt} provides a natural mechanism for online adaptation of LLMs.
At a high level, the goal is to adapt model weights at \emph{inference} time, based on characteristics of the input.
Specifically, an \emph{inner loop} updates a small set of ``fast'' weights based on each incoming token or reasoning step, by minimizing a self-supervised objective.
A separate \emph{outer loop} is trained across many sequences to learn the shared initialization and feature mappings, which make the inner-loop updates stable and transferable~\citep{sun2020test}.
The TTT framework is designed to improve overall modeling capability and generalization in new domains or long sequences through meta-learning.
However, it is usually applied with a reconstruction-based loss at inference time, which is not directly aligned with calibration or risk control.

In this paper, we propose \textbf{O}nline \textbf{R}easoning \textbf{Ca}libration (\textbf{\abbr}), which achieves efficient and confident test-time scaling, by framing calibration itself as an objective that can be optimized at inference time.
Concretely, the inner loop optimizes a scoring function (correctness or consistency of LLM attempt), which is implemented through a TTT layer and updated online during the reasoning or search trajectory.
As a result, calibration can be adapted to different stages of reasoning at the instance level.
In parallel, the outer-loop is a meta-training procedure that learns the shared slow weights (initialization and feature mappings of the calibration layer), so that the online updates can remain stable, data-efficient, and transferable at dataset level.
This design addresses two key limitations of prior work.
First, it preserves statistical validity by calibrating the complete algorithm executed at deployment.
Second, it improves robustness to distribution shift, by allowing instance-wise online adaptation of confidence estimates, while the base LLM remains fixed.

Empirically, our approach provides reliable stopping and candidate selection with controlled risk, reducing unnecessary test-time compute on easy instances and scaling compute only when uncertainty warrants it. At the target risk level $\delta{=}0.1$ on the in-distribution test split, applying \abbr~to Qwen2.5-32B leads to savings up to 47.5\% in supervised mode and 40.7\% in consistency mode. In the zero-shot OOD setting, our method improves MATH-500 savings from 24.8\% of static baseline to 67.0\% with supervised labels, and it consistently outperforms static probes across model families (Qwen2.5-32B, QwQ-32B, and Llama-3.3-70B) and benchmarks (MATH, GPQA, and AIME).

\section{Preliminary}
\label{sec:prelim}

\textbf{Test-time training.~}
Language models can be brittle under instance-level variation and distribution shift~\citep{lu-etal-2022-fantastically,tang-etal-2024-found}.
Since it is intractable to enumerate new use cases and finetune models, test-time training (TTT) addresses this issue by meta-training to adapt lightweight modules during inference~\citep{sun2020test}.
Concretely, TTT introduces fast weights $W_t$ updated within each test sequence by minimizing a self-supervised or proxy objective. Following the standard formulation as in \citep{sun2024ttt},
\begin{align}
\ell(W;x_t) &= \|f(\theta_K x_t;W)-C_t\|_2^2, \\
W_t &= W_{t-1}-\eta G_t, \quad G_t\approx\nabla_W \ell(W_{t-1};x_t), \\
z_t &= f(\theta_Q x_t;W_t).
\end{align}
Here, $C_t$ is the designed objective, which is a projection $\theta_V x_t$ aiming for self-reconstruction in the original formulation of \citet{sun2024ttt}, but can actually be any properly defined target (e.g., calibration label in our setting); the \emph{inner loop} performs per-step online adaptation via \emph{fast weights} $W_t$, while the \emph{outer loop} meta-learns shared \emph{slow weights} parameters (e.g., optional projections $\theta_{Q,K,V}$, initialization $W_0$) across many sequences so that inner-loop updates are stable and transferable.
For simplicity, we use $\theta_{Q,K,V}$ as a unified notation for outer-loop view parameters: they instantiate Q/K/V-style projections as feature extractors if learnable, or recover the \emph{no-QK} special case when set to identity.

TTT can be viewed as bilevel optimization: the outer objective learns how to make fast online learning effective, and the inner objective realizes instance-specific adaptation at inference time. In our setting, this mechanism is used to improve calibration quality while preserving downstream risk control through conformal calibration of the deployed procedure.

\textbf{Conformal prediction and risk control.~}
In real-world deployment, we need uncertainty estimates that make stopping/selection decisions statistically reliable rather than heuristic. Conformal prediction provides finite-sample, distribution-free validity under exchangeability by converting nonconformity scores into calibrated prediction sets~\citep{shafer2008tutorial,angelopoulos2021gentle}. In split conformal prediction, the score model is fit on training data and the decision threshold is calibrated on a held-out calibration set~\citep{vovk2005algorithmic,papadopoulos2008inductive}. Concretely, given calibration scores $\{u_i\}_{i=1}^n$ and miscoverage level $\epsilon\in(0,1)$, define the empirical quantile
\begin{equation}
\tau_{1-\epsilon}=\operatorname{Quantile}_{\lceil (n+1)(1-\epsilon)\rceil/(n+1)}(u_1,\ldots,u_n).
\end{equation}
At test time, we accept outputs whose score is below (or confidence above) this threshold to control the risk (formally defined in \Cref{sec:theory}).
Equivalently, we form a conformal set containing all candidates that satisfy the calibrated criterion.
This yields marginal finite-sample coverage (equivalently, risk control), typically of the form $\mathbb{P}(Y\in\widehat{\mathcal{C}}(X))\ge 1-\epsilon$.

Learn-then-Test (LTT) complements conformal prediction by calibrating \emph{decision rules} rather than prediction sets. Given a candidate family of rules (or thresholds), LTT tests mean-risk nulls of the form $H_j:\, r_j \ge \delta$ with valid p-values on a held-out calibration set, then applies multiple testing (e.g., fixed-sequence testing) to select a rule with finite-sample guarantee $\mathbb{P}(r(\hat{j})\le \delta)\ge 1-\epsilon$~\citep{angelopoulos2021learn,wu2025thoughtcal,wang2026conformal}. Thus, conformal and LTT share the same exchangeability-based validity principle, but target different objects: set coverage vs. rule-level risk control.

\section{\name}
\label{sec:ttt_thought_calibration}

\subsection{Setup}

\begin{table}[t]
\begin{center}
\small
\vspace{-2pt}
\caption{Notation summary.}
\label{tab:notation}
\begin{tabular}{lcl}
\toprule
Symbol & Learned in & Description \\
\midrule
$\phi_t$ & -- & Step embedding: mean-pooled LLM hidden state at step $t$ \\
$C_t$ & -- & Step label: correctness or consistency; $C_t{=}0$ at inference \\
$W_0, b_0$ & outer loop & Initial probe weights, learned via meta-training \\
$W_t, b_t$ & inner loop & Probe weights at step $t$, updated online during inference \\
$\theta_{Q,K}$ & outer loop & Unified outer view parameters (optional learned projections or identity) \\
$\eta$ & outer loop & Inner learning rate; fixed or learned \\
$s_t$ & -- & Probe score: $s_t = \sigma\!\big(f(\theta_Q \phi_t;\, W_t)\big)$ \\
$\ell(W; \phi_t)$ & -- & Inner-loop objective: $(s_t - C_t)^2$ (Brier score) \\
$\lambda^*$ & calibration & LTT-calibrated stopping threshold \\
$\delta$ & -- & Target risk upper bound \\
$\epsilon$ & -- & LTT failure probability level \\
\bottomrule
\end{tabular}
\end{center}
\vspace{-3pt}
\end{table}
We study the test-time reasoning procedure for input $x\in\mathcal{X}$.
At reasoning step $t$, the model has generated thought prefix $y_t=[y^{(1)},\ldots,y^{(t)}]$, where the current answer candidate $\operatorname{ans}(y_t)$ can be derived from the reasoning state.
We denote by $\phi_t\in\mathbb{R}^{d_\phi}$ a hidden representation extracted from the base LLM at step $t$.
This feature vector is input to the calibration module, which then outputs a probe score $z_t=f(\phi_t;W_t)$, where $W_t$ are step-dependent fast weights. For risk control, let $L(\hat y,y)\in\{0,1\}$ denote the final-decision loss (e.g., a premature stop or an incorrect accepted answer).

Our formulation follows the TTT update structure of \citet{sun2024ttt} and the Learn-then-Test (LTT) risk-control framework of \citet{angelopoulos2021learn, wu2025thoughtcal}. TTT updates use inner-loop fast-weight adaptation with outer-loop learned mappings/initialization, and risk control is enforced through LTT calibration of the deployed decision procedure.
We redefine the per-step objective to target calibration quality rather than reconstruction.
Since inference includes online updates of $W_t$, we calibrate the full adaptive algorithm (including sampling, updates, stopping rule) rather than relying on a static scorer.
Table~\ref{tab:notation} summarizes the notation correspondence.

\begin{figure}[t]
\centering
\includegraphics[width=\textwidth]{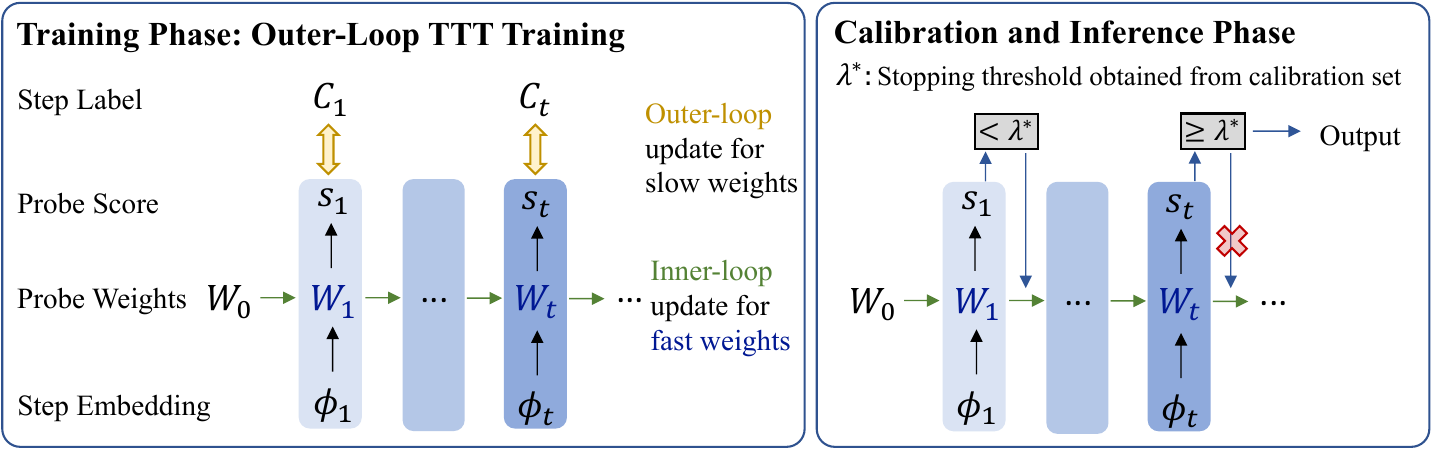}
\vspace{-0.5cm}
\caption{Framework of Online Reasoning Calibration (ORCA).}
\label{fig:orca}
\vspace{-0.25cm}
\end{figure}

\subsection{Inner loop: online-adaptive probe via fast-weight updates}
\label{sec:ttt_calibration_layer}

At each reasoning step $t$, the base LLM produces a hidden representation $\phi_t \in \mathbb{R}^{d_\phi}$ (e.g., the mean-pooled last-layer hidden state).
Our probe maintains fast weights $W_t$ that are updated online along the reasoning chain, producing a conformal score $s_t \in [0,1]$ at each step.
The procedure follows a \emph{score-then-update} protocol: the probe first scores the current step using the accumulated weights, then updates its weights before moving to the next step.

We first introduce a \textbf{vanilla form of inner loop updates}.
Let $f(\cdot\,; W): \mathbb{R}^{d_\phi} \to [0,1]$ be a probe model (e.g., $f(u; W) = \sigma(W \cdot u + b)$). Three operations at each step are as follows,

\textit{Score} (using weights from previous steps):
\begin{equation}
\label{eq:score}
s_t = f(\phi_t;\, W_{t-1})
\end{equation}

\textit{Inner-loop loss} (Brier score against label $C_t$):
\begin{equation}
\label{eq:inner_loss}
\ell(W_{t-1};\, \phi_t) = \big(s_t - C_t\big)^2
\end{equation}

\textit{Weight update} (online gradient descent):
\begin{equation}
\label{eq:update}
W_t = W_{t-1} - \eta \,\nabla_W \ell(W_{t-1};\, \phi_t)
\end{equation}
Here $C_t \in \{0,1\}$ is a step-level label indicating the quality of current answer attempt, depending on the calibration metric.
As discussed in \citet{wu2025thoughtcal}, $C_t$ can come from multiple sources in meta-training:
(i)~\emph{supervised}: $C_t = \mathbb{I}\{\operatorname{ans}(y_t) \text{ is correct}\}$, requiring ground-truth answers;
(ii)~\emph{consistent}: $C_t = \mathbb{I}\{\operatorname{ans}(y_t) = \operatorname{ans}(y_T)\}$, comparing to the full-budget answer without ground truth; or
(iii)~teacher/verifier labels from an external model.
The conformal score $s_t$ aims to evaluate the confidence of a positive label, and the probing function $f$ can often be near-linear as observed in \citep{park2023linear,zhang2025reasoning}.

\subsection{Outer loop: meta-learning for generalizable adaptive calibration}

Instance-wise updates alone may overfit local noise and degrade risk control under distribution shift. Analogous to meta-learning, the outer loop learns proper initialization and feature interfaces, so that inner-loop adaptation remains transferable across tasks~\citep{sun2020test}.
Concretely, the inner loop \emph{learns to calibrate} (task-specific online adaptation), while the outer loop \emph{learns to learn to calibrate}.
By meta-training across heterogeneous datasets and prompts, the model learns initialization and update dynamics that generalize beyond single training distribution, enabling robust and efficient calibration under distribution shift.

\begin{algorithm}[t]
\caption{Training phase: outer-loop TTT training}
\label{alg:outer}
\begin{algorithmic}[1]
\REQUIRE Training prompts $\mathcal{D}_{\mathrm{train}}$; outer parameters $\Theta_{\mathrm{outer}}=(\theta_{Q,K},W_0,\eta)$.
\FOR{each prompt $x\in\mathcal{D}_{\mathrm{train}}$}
 \STATE Unroll inner updates along the reasoning trajectory to obtain $\{W_t\}_{t=1}^{T}$ using $W_t=W_{t-1}-\eta\nabla_W\ell(W_{t-1};\phi_t)$.
 \STATE Compute $\mathcal{L}_{\mathrm{outer}}\!\left(x,\{W_t\}_{t=1}^{T};\Theta_{\mathrm{outer}}\right)=\sum_{t=1}^{T}(s_t-C_t^{\text{true}})^2$.
 \STATE Update trainable components of $\Theta_{\mathrm{outer}}$ by differentiating $\mathcal{L}_{\mathrm{outer}}$ through the unroll.
\ENDFOR
\STATE \textbf{Return} trained $\Theta_{\mathrm{outer}}$.
\end{algorithmic}
\end{algorithm}

Equations~\eqref{eq:score}--\eqref{eq:update} define the simplest TTT update rule, where a probe with fast weights $W \in \mathbb{R}^{1 \times d_\phi}$ operates directly on the embedding $\phi_t$.
This parameterization has only $d_\phi + 1$ learnable parameters (the initialization $W_0$ and bias $b_0$) and can be viewed as an online-adaptive logistic regression, which is denoted as the vanilla or \emph{no-QK} variant.

Analogous to \cite{sun2024ttt}, a natural extension introduces learned projections $\theta_K, \theta_Q \in \mathbb{R}^{d_h \times d_\phi}$ that map $\phi_t$ to a lower-dimensional space before the update and scoring operations:
\begin{align}
s_t &= f(\theta_Q \phi_t;\, W_{t-1}), \quad W \in \mathbb{R}^{1 \times d_h} \label{eq:score_kq}\\
\ell(W;\, \phi_t) &= \big(f(\theta_K \phi_t;\, W) - C_t\big)^2 \label{eq:loss_kq}
\end{align}
This \emph{QK} variant allows the update direction ($\theta_K$) and scoring direction ($\theta_Q$) to attend to different aspects of the hidden state.
The projections $\theta_K, \theta_Q$ are "slow weights" and are learned in the outer loop along with $W_0$ and $\eta$.
Both variants share the same single-step expressiveness, but they differ in the dynamics of online adaptation.
The no-QK variant updates in the full $d_\phi$-dimensional space, while Q/K updates are constrained to a $d_h$-dimensional subspace. For comprehensiveness, we experiment with TTT with and without QK updates, and observe significant improvement over static baselines for both variants.

We now formally introduce the \textbf{general framework of slow weight training}. Let $\Theta_{\mathrm{outer}}=(\theta_{Q,K},W_0,\eta)$ denote outer parameters. Recall that in our framework, base LLM parameters are not updated during training by default. Given a training prompt $x$, we first unroll the inner-loop updates along its trajectory to obtain $\{W_t\}_{t=1}^{T}$, and then optimize:
\begin{equation}
\min_{\Theta_{\mathrm{outer}}}\;\mathbb{E}_{x\sim\mathcal{D}_{\mathrm{train}}}\,\mathcal{L}_{\mathrm{outer}}\!\left(x,\{W_t\}_{t=1}^{T};\Theta_{\mathrm{outer}}\right),
\end{equation}
with the outer loss defined as
\begin{equation}
\mathcal{L}_{\mathrm{outer}}\!\left(x,\{W_t\}_{t=1}^{T};\Theta_{\mathrm{outer}}\right) := \sum_{t=1}^{T}\left(s_t - C_t^{\text{true}}\right)^2.
\end{equation}
subject to $W_t=W_{t-1}-\eta\nabla_W\ell(W_{t-1};x_t)$. We optimize this bilevel objective with truncated backpropagation through inner updates, then run LTT on a held-out calibration split produced by the same deployed procedure to select stopping thresholds.
\Cref{alg:outer} summarizes the general form of outer-loop training algorithm, which is visualized in \Cref{fig:orca}.

\newpage
\subsection{Risk-controlled conformal reasoning with online self-calibration}\label{sec:thought_calibration_alignment}

\begin{algorithm}[t]
\caption{Calibration and inference phase}
\label{alg:inner}
\begin{algorithmic}[1]
\REQUIRE Calibration prompts $\mathcal{D}_{\mathrm{cal}}$; threshold grid $\Lambda=\{\lambda_1>\cdots>\lambda_m\}$; prompt $x$; trained $\Theta_{\mathrm{outer}}$.
\STATE \textbf{(A) Calibrate stopping threshold via LTT.}
\FOR{each $\lambda_j\in\Lambda$}
 \STATE Run deployed procedure on $\mathcal{D}_{\mathrm{cal}}$ with threshold $\lambda_j$; compute $\widehat{R}_n(\lambda_j)$ and p-value $p_j$.
\ENDFOR
\STATE Apply fixed-sequence testing over $\{H_j\}_{j=1}^{m}$ to control FWER, obtain $\Lambda_{\mathrm{valid}}$, and choose the most aggressive $\lambda^\star\in\Lambda_{\mathrm{valid}}$.
\STATE \textbf{(B) Deploy inference procedure with online self-calibration.}
\STATE Initialize fast weights $W\leftarrow W_0$.
\FOR{$t=1,\ldots,T$}
 \STATE Obtain the current hidden representation $\phi_t$ from the reasoning state.
 \STATE Compute probe score $s_t=f(\phi_t;W)$.
 \IF{$s_t\ge\lambda^\star$}
  \STATE \textbf{Stop} and output $\hat z\leftarrow\operatorname{ans}(y_t)$.
  \STATE \textbf{return} $\hat z$.
 \ENDIF
 \STATE Set pseudo-target $C_t\leftarrow 0$ and perform inner-loop update $W\leftarrow W-\eta\nabla_W\ell(W;\phi_t)$.
\ENDFOR
\STATE \textbf{Return} $\operatorname{ans}(y_T)$ if the budget is exhausted.
\end{algorithmic}
\end{algorithm}

Following \citet{wu2025thoughtcal}, calibration is performed on a stopping \emph{decision rule} (i.e. the full deployed procedure), rather than on raw uncalibrated scores. 
For a threshold $\lambda\in\Lambda$, recall the score process produced by the deployed procedure $s_t(x)\;:=\;f(\phi_t;\,W_{t-1})$.
Define the stopping time
\begin{equation}
\tau_\lambda(x)\;:=\;\min\{t\le T:\; s_t(x)\ge \lambda\},
\end{equation}
and the corresponding deployed decision rule / procedure output
\begin{equation}
\mathcal{A}_\lambda(x)\;:=\;\operatorname{ans}\!\big(y_{\tau_\lambda(x)}\big),
\end{equation}
i.e., the final answer attempt at the stopping time (or $\operatorname{ans}(y_T)$ if the budget is exhausted). 
Learn-then-Test (LTT) selects $\lambda^\star$ by calibrating the \emph{risk of the entire procedure} $\mathcal{A}_\lambda$ on a held-out calibration set.
In particular, LTT sweeps a grid $\Lambda = \{\lambda_1 > \cdots > \lambda_m\}$ from conservative to aggressive, testing at each $\lambda$ whether the empirical risk exceeds the tolerance using a binomial p-value, i.e., we test
\begin{equation}
H_j:\;\mathbb{E}[R(y_{\tau_{\lambda_j}})]\ge\delta,
\end{equation}
where $\delta$ is the target risk upper bound, and construct p-values
\begin{equation}
p_{j}^{\mathrm{BT}}:=\mathbb{P}(\mathrm{Binom}(n,\delta)\le n\widehat{R}_n(\lambda_j)).
\end{equation}
where $\epsilon$ is the target failure probability level for the test family.
Apply fixed-sequence testing over $\{H_j\}_{j=1}^m$ to control family-wise error rate (FWER), then select the most aggressive rejected threshold $\lambda^*$. The selected threshold satisfies
\begin{equation}
\mathbb{P}\!\left(\mathbb{E}[R(y_{\tau_{\lambda^*}})]\le\delta\right)\ge 1-\epsilon.
\end{equation}
Consequently, FWER control yields finite-sample risk control at level $(\delta,\epsilon)$ under exchangeability.
The guarantees apply to the \emph{entire deployed procedure}, including reasoning chain expansion, online fast-weight updates, and threshold-based stopping~(\Cref{fig:orca}).
The full calibration and inference-time deployment of \abbr~is presented in \Cref{alg:inner}. We detail the theoretical guarantees in Appendix~\ref{sec:theory} and more discussions in Appendix~\ref{sec:discussion}.

\section{Experiments}
\label{sec:experiments}

We evaluate the \name~framework on reasoning efficiency: given a risk tolerance $\delta$, how much compute can be saved by early stopping while maintaining answer quality?
We compare against the static linear probe of \citet{wu2025thoughtcal} across multiple models, label modes, and out-of-distribution benchmarks. More experimental results and ablation studies are deferred to Appendix~\ref{sec:exp_appendix}.

\subsection{Setup}
\label{sec:exp_setup}

\textbf{Datasets.~}
We construct a \textbf{5K training corpus} by combining three sources:
(i)~the s1K dataset of 1{,}000 math problems from \citet{muennighoff2025s1},
(ii)~2{,}000 problems from OpenR1~\citep{openr1},
and (iii)~2{,}000 problems from DeepMath~\citep{deepmath}.
Problems are split 3:1:1 into training (3{,}000), calibration (1{,}000), and test (1{,}000) sets.
For out-of-distribution~(OOD) evaluation, we use five held-out benchmarks: MATH-500~\citep{hendrycks2021math}, GPQA-Diamond~\citep{rein2024gpqa} (198 problems), and AIME 2024/2025/2026 (30 problems each).
No OOD problems appear in training or calibration sets.

\textbf{Models.~}
Our primary experiments use \textbf{Qwen2.5-32B-Instruct}~\citep{qwen25}, extracting mean-pooled last-layer hidden states ($d_\phi = 5{,}120$) at each reasoning step.
We also evaluate on \textbf{QwQ-32B}~\citep{qwq32b} and \textbf{Llama-3.3-70B-Instruct}~\citep{llama3} ($d_\phi = 8{,}192$) to test cross-model generalization.
Reasoning trajectories are generated by DeepSeek-R1-671B~\citep{deepseekr1}, and step labels are produced by a teacher model (Qwen-3-32B for correctness, GPT-4.1 for evaluation).

\textbf{Step definition and label modes.~}
To separate a reasoning trajectory into individual steps, we completely follow \citet{wu2025thoughtcal}: we use sections delimited by \texttt{\textbackslash n\textbackslash n}, which also contain "wait" or "but". This separation is reasonable for those models after instruction tuning and RL (\cite{muennighoff2025s1}).
We evaluate two labeling strategies yielding two calibration targets, following \citet{wu2025thoughtcal}:
\textbf{Supervised}, where $C_t = \mathbb{I}\{z_t \text{ is correct}\}$ requires ground-truth labels, and
\textbf{Consistent}, where $C_t = \mathbb{I}\{z_t = z_T\}$ compares intermediate answers to the full-budget answer (no labels required).

\textbf{Metrics.~}
At each risk tolerance $\delta$, Learn-then-Test (LTT; \citet{angelopoulos2021learn}) calibration determines a threshold $\lambda^*$.
We report two metrics.
\ding{182}~\textbf{Savings} $= 1 - \bar{t}_{\text{stop}} / \bar{t}_{\text{total}}$, the fraction of reasoning steps saved by early stopping. We verify in \Cref{sec:step_vs_token} that step-level and token-level savings are highly consistent, so we report step-level savings throughout.
\ding{183}~\textbf{Error rate}, the fraction of problems where the model is stopped at a step where its answer is still incorrect.
Since step labels are cumulative (flip after first correct attempt), only stopping too early leads to an error.
Following \citet{wu2025thoughtcal}, we exclude problems on which the base model never produces a correct answer in its full rollout: on such problems the final answer is wrong regardless of where we stop, so they are uninformative for evaluating early stopping.
These two metrics form a natural trade-off controlled by the threshold $\lambda^*$: a lower threshold stops earlier, yielding higher savings but higher error risk; a higher threshold is more conservative, with lower error but less compute saved.
LTT selects $\lambda^*$ to satisfy the guarantee $\mathbb{P}(R \leq \delta) \geq 1-\epsilon$, where we fix $\epsilon=0.05$ and sweep the risk tolerance $\delta$.
Unless stated otherwise, we report results at $\delta = 0.1$.

\textbf{Training and epoch selection.~}
All TTT-Probe variants are meta-trained with Adam (outer lr $= 10^{-3}$), gradient clipping at 1.0, and inner learning rate $\eta = 0.01$.
We select \textbf{epoch 20} for the no-QK variant and \textbf{epoch 10} for all QK variants (see \Cref{sec:epoch} for details).
Score trajectories are smoothed with a rolling window of 10 steps.

\begin{table}[ht]
\centering
\small
\caption{In-distribution early-stopping performance on the 5K test set (Qwen2.5-32B, $\epsilon{=}0.05$). TTT-Probe (no-QK) improves savings by 24.9\% relative over the static baseline.}
\label{tab:main}
\vspace{1pt}
\begin{tabular}{l cc cc cc cc}
\toprule
& \multicolumn{2}{c}{$\delta=0.05$} & \multicolumn{2}{c}{$\delta=0.1$} & \multicolumn{2}{c}{$\delta=0.15$} & \multicolumn{2}{c}{$\delta=0.2$} \\
\cmidrule(lr){2-3} \cmidrule(lr){4-5} \cmidrule(lr){6-7} \cmidrule(lr){8-9}
Method & Sav. & Err. & Sav. & Err. & Sav. & Err. & Sav. & Err. \\
\midrule
\multicolumn{9}{l}{\textit{Supervised labels}} \\
Static Probe & .220 & .055 & .380 & .105 & .512 & .159 & .625 & .208 \\
TTT no-QK & \textbf{.282} & .053 & \textbf{.475} & .110 & \textbf{.575} & .152 & .673 & .192 \\
TTT QK ($d_h{=}128$) & .233 & .046 & .414 & .103 & .560 & .150 & \textbf{.674} & .204 \\
\midrule
\multicolumn{9}{l}{\textit{Consistent labels (no ground truth)}} \\
Static Probe & .166 & .049 & .345 & .098 & .483 & .156 & .573 & .197 \\
TTT no-QK & .220 & .045 & \textbf{.407} & .096 & \textbf{.529} & .141 & \textbf{.644} & .193 \\
TTT QK ($d_h{=}128$) & \textbf{.232} & .064 & .397 & .113 & .524 & .150 & .629 & .187 \\
\bottomrule
\end{tabular}
\end{table}

\subsection{In-Distribution Results}
\label{sec:main_results}

Table~\ref{tab:main} (and \Cref{fig:risk_compute} in Appendix) compares the static baseline against TTT-Probe across four risk levels.
At $\delta = 0.1$, the supervised TTT-Probe (no-QK) saves 47.5\% of reasoning steps compared to 38.0\% for the static baseline, a \textbf{24.9\% relative improvement}.
The QK variant achieves 41.4\%, an 8.9\% relative improvement.
In consistent mode, the no-QK probe saves 40.7\% vs.\ 34.5\%, an 18.2\% relative improvement \emph{without any ground-truth labels}.

Across all four $\delta$ levels, TTT-Probe uniformly dominates the baseline.
Both the no-QK and QK variants maintain error rates within or close to the prescribed $\delta$ budgets, demonstrating that the online adaptation provides genuine calibration improvements.

\subsection{Out-of-Distribution Generalization}
\label{sec:ood}

A key motivation for TTT-Probe is robustness under distribution shift.
Table~\ref{tab:ood} evaluates probes trained on the 5K corpus and applied \emph{zero-shot} to five OOD benchmarks.

Under supervised labels, both TTT variants achieve strong OOD generalization on MATH-500: no-QK saves 63.7\% and QK saves 67.0\%, compared to 24.8\% for the baseline (a \textbf{2.6--2.7$\times$ improvement}), while keeping errors below 2.3\%.
On GPQA-Diamond, the no-QK probe achieves 71.5\% savings.
Under consistent labels, the QK variant achieves 63.7\% on MATH-500 (2.7$\times$ the baseline), demonstrating that label-free TTT is viable for OOD deployment.

\begin{table}[ht]
\centering
\small
\vspace{-1pt}
\caption{OOD generalization at $\delta{=}0.1$. The TTT-Probe achieves 2.6--2.7$\times$ the baseline savings on MATH-500 under supervised labels.}
\label{tab:ood}
\vspace{1pt}
\begin{tabular}{l cc cc cc cc cc}
\toprule
& \multicolumn{2}{c}{MATH-500} & \multicolumn{2}{c}{GPQA} & \multicolumn{2}{c}{AIME'24} & \multicolumn{2}{c}{AIME'25} & \multicolumn{2}{c}{AIME'26} \\
\cmidrule(lr){2-3} \cmidrule(lr){4-5} \cmidrule(lr){6-7} \cmidrule(lr){8-9} \cmidrule(lr){10-11}
Method & Sav. & Err. & Sav. & Err. & Sav. & Err. & Sav. & Err. & Sav. & Err. \\
\midrule
\multicolumn{11}{l}{\textit{Supervised labels}} \\
Static Probe & .248 & .008 & .643 & .270 & .158 & .050 & .139 & .000 & .147 & .050 \\
TTT no-QK & .637 & .023 & \textbf{.715} & .300 & .293 & .150 & \textbf{.265} & .056 & \textbf{.198} & .050 \\
TTT QK ($d_h{=}128$) & \textbf{.670} & .021 & .665 & .210 & \textbf{.295} & .100 & .258 & .000 & .134 & .050 \\
\midrule
\multicolumn{11}{l}{\textit{Consistent labels (no ground truth)}} \\
Static Probe & .239 & .004 & .602 & .328 & .118 & .033 & .101 & .000 & .147 & .100 \\
TTT no-QK & .555 & .012 & .598 & .318 & .141 & .033 & \textbf{.166} & .067 & \textbf{.154} & .067 \\
TTT QK ($d_h{=}128$) & \textbf{.637} & .016 & \textbf{.653} & .328 & \textbf{.185} & .033 & .139 & .000 & .092 & .000 \\
\bottomrule
\end{tabular}
\end{table}

\subsection{Cross-Model Performance}
\label{sec:cross_model}

To verify that our findings generalize beyond Qwen2.5-32B, we evaluate the same configurations on QwQ-32B ($d_\phi = 5{,}120$) and Llama-3.3-70B-Instruct ($d_\phi = 8{,}192$).
All probes are trained and evaluated independently on each model's own embeddings.

Both TTT-Probe variants consistently outperform the static baseline across all three models.
The no-QK probe achieves relative improvements of 24.9\% on Qwen, 33.7\% on QwQ, and 19.8\% on Llama.
The QK variant also improves over the baseline on all models (6.8--27.6\% relative).
All error rates remain within or close to the $\delta{=}0.1$ budget, confirming that the online adaptation mechanism is model-agnostic.

\begin{table}[ht]
\centering
\small
\vspace{-2pt}
\caption{Cross-model results ($\delta{=}0.1$, supervised). TTT-Probe consistently outperforms the static baseline across all three model families.}
\label{tab:cross_model}
\vspace{2pt}
\begin{tabular}{l cc cc cc}
\toprule
& \multicolumn{2}{c}{Qwen2.5-32B} & \multicolumn{2}{c}{QwQ-32B} & \multicolumn{2}{c}{Llama-3.3-70B} \\
\cmidrule(lr){2-3} \cmidrule(lr){4-5} \cmidrule(lr){6-7}
Method & Sav. & Err. & Sav. & Err. & Sav. & Err. \\
\midrule
Static Probe & .380 & .105 & .295 & .094 & .354 & .104 \\
TTT no-QK & \textbf{.475} & .110 & \textbf{.394} & .081 & \textbf{.424} & .090 \\
TTT QK ($d_h{=}128$) & .414 & .103 & .376 & .076 & .378 & .081 \\
\bottomrule
\end{tabular}
\end{table}

\subsection{Multiple Calibration Targets}
\label{sec:sup_vs_con}

Supervised labels yield 10--17\% higher savings than consistent labels on in-distribution data, reflecting the additional information provided by ground-truth correctness~(\Cref{tab:sup_con}).
The no-QK probe achieves 47.5\% savings in supervised mode and 40.7\% in consistent mode, both substantial improvements over the static baseline (24.9\% and 18.2\% relative respectively).
On OOD data, supervised no-QK averages 42.2\% savings vs.\ 32.3\% for consistent.
The consistent TTT-Probe remains attractive for deployment where ground-truth labels are unavailable: it achieves a 34.0\% relative improvement on OOD data over the consistent baseline while requiring only the model's own full-budget answers as supervision.

\begin{table}[ht]
\centering
\small
\vspace{-2pt}
\caption{Supervised vs.\ consistent comparison ($\delta{=}0.1$), including average OOD savings and error across five benchmarks.}
\label{tab:sup_con}
\vspace{2pt}
\begin{tabular}{l cccc cccc}
\toprule
& \multicolumn{4}{c}{Supervised} & \multicolumn{4}{c}{Consistent} \\
\cmidrule(lr){2-5} \cmidrule(lr){6-9}
& \multicolumn{2}{c}{In-dist} & \multicolumn{2}{c}{OOD avg} & \multicolumn{2}{c}{In-dist} & \multicolumn{2}{c}{OOD avg} \\
\cmidrule(lr){2-3} \cmidrule(lr){4-5} \cmidrule(lr){6-7} \cmidrule(lr){8-9}
Configuration & Sav. & Err. & Sav. & Err. & Sav. & Err. & Sav. & Err. \\
\midrule
Static Probe & .380 & .105 & .267 & .076 & .345 & .098 & .241 & .093 \\
TTT no-QK & \textbf{.475} & .110 & \textbf{.422} & .116 & \textbf{.407} & .096 & .323 & .099 \\
TTT QK ($d_h{=}128$) & .414 & .103 & .404 & .076 & .397 & .113 & \textbf{.341} & .076 \\
\bottomrule
\end{tabular}
\end{table}

\section{Related Work}
\vspace{-1pt}

\textbf{Efficient test-time scaling.~}
Recent literature span two directions.
First, numerous works focus on reducing overthinking in post-trained LLMs, either during training or at inference time. Examples include explicit stopping policies and compute-aware generation control~\citep{guo2025deepseek,sui2025stop,han2024token,hou2025thinkprune,yang2025dynamic,zhang2025reasoning,sun2025stop}.
Second, self-consistency between parallel trajectories may be used to rank or terminate reasoning attempts~\citep{wang2022self,mitchell2022enhancing,weng2023large,wang2024math}, and may also be combined with calibration methods for controllable efficiency gains~\citep{xie2024calibrating,huang2025efficient,liu2026pets}.
Our work also aims to sample efficiently from language models at inference time, but differs in its scope.
While prior works calibrate probes that predict when to stop, we conformalize the end-to-end decision rule for stopping, which includes reasoning expansion and online adaptation to sampling dynamics.

\textbf{Uncertainty quantification and calibration for LLM reasoning.~}
Uncertainty quantification methods have primarily been used to calibrate whether language model outputs are self-consistent~\citep{rubin2025conformal}, high quality~\citep{quachconformal,semanticdensity,GraphUncertaintyLM,li2025calibrating,schuster2022confident}, and factual~\citep{tatsu2024factuality,Candes24EnhancedConformal,cherian2024large,liu2024uncertainty,prinster2026conformal}.
Some recent papers extend the framework from LLMs to agentic reasoning~\citep{diverseagententropy,sadhuka2025valuator,lee2026agentic}.
However,these methods filter sets of text post-hoc, rather than guide decoding in an online setting.
More similar to this work, several methods calibrate the sampling of output sets~\citep{quachconformal,wu2025thoughtcal,huanguncertainty,xiong2025atts,xie2025statistical,wang2026conformal,huang2026cats}, but these methods often assume that the distributions of reasoning steps and prompts are static. 
Our work differs in that it models the sampling dynamics, as well as potential shifts in prompt distribution at deployment.

\textbf{Test-time training and online adaptation.~}
TTT aims to improve generalization capability under distribution shifts by learning to conduct lightweight parameter updates at test-time, usually through self-supervised loss~\citep{sun2020test}.
The framework is also widely adopted to design efficient architectures including RNNs and linear transformers~\citep{sun2024ttt,zhang2026testtime}, or adopt other designs such as sample-specific vector~\citep{hu2025slot}, LoRA~\citep{wang2024greater} or input perplexity minimization~\citep{hu2025test} to adapt language models at test time. In comparison, we are the first to introduce online adaptation and test-time training into calibration of LLM reasoning.

\section{Conclusion}
\vspace{-1pt}
We introduced \name~(\abbr), a unified framework for risk-controlled test-time scaling that combines online test-time training with conformal calibration of the deployed stopping rule.
The key idea is to treat calibration itself as an adaptive prediction problem:
the inner loop performs instance-specific \emph{learn-to-calibrate} updates, while the outer loop meta-learns the initialization and update dynamics, which transfer across datasets and remain robust under distribution shift.
By calibrating the full deployed procedure via LTT, \abbr~provides finite-sample risk control while enabling adaptive compute allocation.
Empirically, \abbr~significantly improves efficiency at controlled error rates across multiple model families and benchmarks, including challenging zero-shot OOD settings. These results indicate that dynamically updated calibration modules can substantially improve both reliability and compute efficiency relative to static confidence estimators.
More broadly, this work highlights a practical direction for integrating meta-learning and conformal decision-making towards efficient reasoning,
and demonstrates that jointly designing adaptation and calibration yield effective and robust systems.

\vspace{-1pt}
\section*{Ethics Statement}
\vspace{-1pt}
This paper is about technical methods for LLM reasoning, and is not directly related to any ethical issues. Moreover, the calibration methods can be applied to produce more trustworthy LLM outputs, which may benefit the ethical aspects in LLM usages.

\bibliography{preprint}
\bibliographystyle{colm2026_conference}

\newpage
\appendix
\section*{Appendix}

\section{Theoretical Guarantees}
\label{sec:theory}
We provide theoretical guarantees of risk control through \abbr~in this section, with slightly more complicated and rigorous notations.
The central question is whether intra-instance online updates invalidate Learn-then-Test (LTT) risk control. 
Our key observation is that LTT calibrates the \emph{entire deployed procedure} as a black-box algorithm. 
As long as (i) the procedure resets its internal state across instances and (ii) calibration and test runs are exchangeable under the same deployed procedure, finite-sample risk control remains valid.

\paragraph{Calibration and test data.}
Let $\mathcal{D}_{\mathrm{cal}}=\{(X_i,Y_i)\}_{i=1}^n$ be a calibration set and let $(X_{n+1},Y_{n+1})$ be a fresh test point.
For the p-value construction below, we assume $(X_i,Y_i)$ are i.i.d.\ from a distribution $P$ (hence exchangeable), and independent of the random seeds used by the deployed algorithm.
All trained outer-loop parameters (e.g., $\Theta_{\mathrm{outer}}$ and any fixed feature extractors) are treated as fixed constants \emph{independent} of $\mathcal{D}_{\mathrm{cal}}$ (e.g., learned on disjoint data); equivalently, all guarantees hold conditional on these fixed parameters.

\paragraph{Deployed procedure as a randomized map.}
For each threshold $\lambda\in\Lambda$, define the \emph{entire deployed reasoning procedure} as a (possibly randomized) mapping
\begin{equation}
\mathcal{A}_\lambda:\mathcal{X}\times\mathcal{U}\to\mathcal{Y},
\qquad 
\hat Y=\mathcal{A}_\lambda(x;U),
\end{equation}
where $U\sim P_U$ denotes all internal randomness (LLM sampling, search randomness, etc.).
This mapping includes (i) initializing fast weights $W_0\leftarrow \Theta_{\mathrm{outer}}$, (ii) computing step representations $\phi_t$, (iii) computing scores $s_t=f(\phi_t;W_{t-1})$, (iv) updating fast weights within the instance, and (v) stopping at $\tau_\lambda(x;U)=\min\{t\le T:\, s_t(x;U)\ge\lambda\}$.

\paragraph{Risk definition (includes algorithm randomness).}
Let $L:\mathcal{Y}\times\mathcal{Y}\to\{0,1\}$ be a 0--1 risk (e.g., incorrectness indicator). 
For a single instance $(X,Y)$ and randomness $U$, define
\begin{equation}
R(\lambda;X,Y,U)\;:=\;L(\mathcal{A}_\lambda(X;U),Y)\in\{0,1\}.
\end{equation}
Let the \emph{marginal (deployment) risk} of threshold $\lambda$ be
\begin{equation}
r(\lambda)\;:=\;\mathbb{E}_{(X,Y)\sim P}\,\mathbb{E}_{U\sim P_U}\big[\,R(\lambda;X,Y,U)\,\big].
\end{equation}
On the calibration set, for each $\lambda$ we run the deployed procedure independently per instance with fresh randomness $U_i\stackrel{\mathrm{i.i.d.}}{\sim}P_U$ and obtain
\begin{equation}
R_i(\lambda)\;:=\;R(\lambda;X_i,Y_i,U_i),\qquad i=1,\dots,n.
\end{equation}

\begin{lemma}[Intra-instance adaptation preserves inter-instance exchangeability]
\label{lemma:exchangeability}
Fix any threshold $\lambda\in\Lambda$. 
If $\{(X_i,Y_i)\}_{i=1}^{n+1}$ are exchangeable and $\{U_i\}_{i=1}^{n+1}$ are i.i.d.\ and independent of $\{(X_i,Y_i)\}$, then the sequence
$\{R_i(\lambda)\}_{i=1}^{n+1}$ with $R_i(\lambda):=R(\lambda;X_i,Y_i,U_i)$ is exchangeable (indeed i.i.d.\ under the i.i.d.\ assumption).
\end{lemma}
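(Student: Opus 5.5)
The plan is to write each risk as a single fixed function applied coordinatewise. Set $Z_i := (X_i, Y_i, U_i)$ and $g_\lambda(x,y,u) := L(\mathcal{A}_\lambda(x;u), y)$, so that $R_i(\lambda) = g_\lambda(Z_i)$. The key structural fact is that $g_\lambda$ does not depend on $i$ or on any other instance. The deployed procedure resets its fast weights to $W_0$ (part of $\Theta_{\mathrm{outer}}$) at the start of every instance. Within the instance, the fast weights $W_t$ evolve only through $\phi_1,\dots,\phi_t$, and these are determined by $(x, u)$ alone. The outer parameters $\Theta_{\mathrm{outer}}$ and the threshold $\lambda$ are fixed constants that are independent of the calibration data. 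Hence the whole inner-loop trajectory, the stopping time $\tau_\lambda(x;u)$, and the output $\mathcal{A}_\lambda(x;u)$ are a deterministic measurable function of $(x,u)$. I would state this explicitly as the first step, by unrolling the recursion $W_t = W_{t-1} - \eta \nabla_W \ell(W_{t-1};\phi_t)$ with pseudo-target $C_t=0$.

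The second step shows that $(Z_1,\dots,Z_{n+1})$ is exchangeable. Take any permutation $\pi$. The data and seeds are independent, so the joint law factorizes as $\mathcal{L}((X_i,Y_i)_i)\otimes P_U^{\otimes (n+1)}$. Permuting indices leaves the first factor invariant because the data are exchangeable. It leaves the second factor invariant because the seeds are i.i.d. So $(Z_{\pi(1)},\dots,Z_{\pi(n+1)}) \overset{d}{=} (Z_1,\dots,Z_{n+1})$.

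The third step applies the map $g_\lambda^{\otimes(n+1)}$ to both sides. Equality in distribution is preserved under a measurable map, and this map commutes with coordinate permutations. Therefore $(R_{\pi(i)}(\lambda))_i \overset{d}{=} (R_i(\lambda))_i$, which is exchangeability. For the parenthetical i.i.d. claim, assume the $(X_i,Y_i)$ are i.i.d. from $P$. Then the $Z_i$ are i.i.d. from $P\otimes P_U$, and applying a fixed function coordinatewise gives i.i.d. Bernoulli$(r(\lambda))$ variables, with the mean identified directly from the definition of $r(\lambda)$.

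The main obstacle is the first step, namely making rigorous that no cross-instance information leaks into $g_\lambda$. This is where the intra-instance online updates could cause a problem. If fast weights were carried over between instances, or if $\Theta_{\mathrm{outer}}$ were fit on $\mathcal{D}_{\mathrm{cal}}$, then $R_i$ would depend on $Z_j$ for $j\neq i$ and the argument would fail. I would therefore highlight the reset assumption and the independence of $\Theta_{\mathrm{outer}}$ from the calibration data, and otherwise treat measurability of $g_\lambda$ as routine, since it is a finite composition of measurable operations over at most $T$ steps.
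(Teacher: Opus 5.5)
Your proposal is correct and follows the paper's proof. The paper likewise argues that the per-instance reset makes $R_i(\lambda)$ the same measurable function of $(X_i,Y_i,U_i)$, and then invokes closure of exchangeability under coordinatewise measurable maps. Your version additionally spells out a step the paper leaves implicit: because the data are independent of the i.i.d.\ seeds, the triples $(X_i,Y_i,U_i)$ are themselves exchangeable.
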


\begin{proof}
Because the deployed procedure resets its internal state at the start of each instance, $(X_i,Y_i,U_i)\mapsto R(\lambda;X_i,Y_i,U_i)$ is the same measurable function applied separately to each triple.
Exchangeability of $\{(X_i,Y_i,U_i)\}$ implies exchangeability of $\{R_i(\lambda)\}$ by closure of exchangeability under measurable mappings.
\end{proof}

\begin{theorem}[Finite-sample risk control of \abbr{} via LTT fixed-sequence testing]
\label{thm:risk_control}
Let $\delta\in(0,1)$ be a target risk level and $\epsilon\in(0,1)$ a failure-probability level.
Fix an ordered threshold grid $\Lambda=\{\lambda_1>\lambda_2>\cdots>\lambda_m\}$.
For each $\lambda_j$, consider the null hypothesis
\begin{equation}
H_j:\; r(\lambda_j)\ge \delta
\quad (\text{equivalently, the procedure is \emph{not} $\delta$-safe}).
\end{equation}
For each $j$, let $\widehat{r}_n(\lambda_j)=\frac{1}{n}\sum_{i=1}^n R_i(\lambda_j)$ and define the one-sided binomial p-value
\begin{equation}
p_j \;:=\; \mathbb{P}\!\left(\mathrm{Binom}(n,\delta)\le n\,\widehat{r}_n(\lambda_j)\right).
\end{equation}
Assume $\{R_i(\lambda_j)\}_{i=1}^n$ are i.i.d.\ Bernoulli with mean $r(\lambda_j)$ (which holds under the assumptions above).
Apply fixed-sequence testing (FST): test $H_1,H_2,\dots$ in order at level $\epsilon$, rejecting $H_j$ if $p_j\le \epsilon$, and stop at the first $j$ such that $p_j>\epsilon$.
Let $\hat{\jmath}$ be the last rejected index (or $\hat{\jmath}=0$ if none are rejected) and output $\lambda^\star:=\lambda_{\hat{\jmath}}$ (the most aggressive rejected threshold).
Then
\begin{equation}
\mathbb{P}_{\mathcal{D}_{\mathrm{cal}}}\big(r(\lambda^\star)\le \delta\big)\;\ge\;1-\epsilon.
\end{equation}
\end{theorem}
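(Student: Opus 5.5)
The plan is the standard Learn-then-Test argument: reduce the statement to family-wise error rate (FWER) control of fixed-sequence testing, which in turn rests on validity of the binomial p-value. The online fast-weight updates play no role beyond what \Cref{lemma:exchangeability} already supplies: for each fixed $j$, the losses $R_1(\lambda_j),\dots,R_n(\lambda_j)$ are i.i.d.\ Bernoulli$(r(\lambda_j))$. Here $r(\lambda_j)$ is a deterministic number, because the outer parameters are fixed independently of $\mathcal{D}_{\mathrm{cal}}$. Consequently the set of true nulls $\mathcal{H}_0:=\{j: r(\lambda_j)\ge\delta\}$ is also deterministic. This fact is what makes the sequential argument below work.

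\textbf{Step 1 (super-uniformity).} I will show that for every $j\in\mathcal{H}_0$ and every $\epsilon\in(0,1)$, $\mathbb{P}(p_j\le\epsilon)\le\epsilon$.
\begin{itemize}
\item Let $S_j:=n\widehat r_n(\lambda_j)\sim\mathrm{Binom}(n,r(\lambda_j))$, and let $F_\delta$ be the CDF of $\mathrm{Binom}(n,\delta)$, so that $p_j=F_\delta(S_j)$.
\item The binomial family is stochastically increasing in its success probability. Since $r(\lambda_j)\ge\delta$, $S_j$ stochastically dominates $S'\sim\mathrm{Binom}(n,\delta)$; this can be shown by coupling uniforms, $\mathbb{I}\{V_i\le\delta\}\le\mathbb{I}\{V_i\le r\}$.
\item Since $F_\delta$ is nondecreasing, $\mathbb{P}(F_\delta(S_j)\le\epsilon)\le\mathbb{P}(F_\delta(S')\le\epsilon)$.
\item The last probability is at most $\epsilon$ by the discrete probability-integral-transform inequality. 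Let $k^\ast$ be the largest $k$ with $F_\delta(k)\le\epsilon$; then $\mathbb{P}(F_\delta(S')\le\epsilon)=F_\delta(k^\ast)\le\epsilon$.
\end{itemize}
I expect this to be the main technical point, because of the discreteness of the binomial and the direction of the monotonicity. It is nonetheless routine.

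\textbf{Step 2 (FWER of fixed-sequence testing).} If $\mathcal{H}_0=\emptyset$, nothing needs to be shown. Otherwise let $j_0:=\min\mathcal{H}_0$, which is deterministic.
\begin{itemize}
\item FST rejects $H_j$ only if it has rejected $H_1,\dots,H_{j-1}$.
\item Hence rejecting any true null $H_j$ with $j\ge j_0$ requires rejecting $H_{j_0}$.
\item Therefore $\mathbb{P}(\exists\, j\in\mathcal{H}_0 \text{ rejected})\le\mathbb{P}(p_{j_0}\le\epsilon)\le\epsilon$ by Step 1.
\end{itemize}
No dependence structure across different $j$ is needed, which matters because all $p_j$ are computed from the same calibration runs.

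\textbf{Step 3 (conclusion).} On the complementary event, which has probability at least $1-\epsilon$, every rejected hypothesis is false.
\begin{itemize}
\item If $\hat{\jmath}\ge1$, then $\lambda^\star=\lambda_{\hat{\jmath}}$ was rejected, so $r(\lambda^\star)<\delta$.
\item If $\hat{\jmath}=0$, the statement uses the convention $\lambda^\star=\lambda_0$. I will read $\lambda_0$ as the conservative fallback (e.g.\ $\lambda_0>\lambda_1$, or running to the full budget or abstaining), which is assumed to satisfy $r(\lambda_0)\le\delta$. Where this fallback's safety is not automatic, I will make the assumption explicit rather than hide it.
\end{itemize}
Combining the cases gives $\mathbb{P}_{\mathcal{D}_{\mathrm{cal}}}(r(\lambda^\star)\le\delta)\ge1-\epsilon$.
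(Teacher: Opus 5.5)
Your proposal is correct and follows the paper's proof step for step: super-uniformity of the binomial p-value under $H_j$, then FWER control of fixed-sequence testing (any false rejection forces rejection of the first true null), then reading off $r(\lambda^\star)<\delta$ on the complement event. You go slightly beyond the paper in two places: you prove super-uniformity via stochastic dominance and the discrete probability-integral transform rather than citing it, and you flag that the $\hat{\jmath}=0$ case needs a fallback $\lambda_0$ with $r(\lambda_0)\le\delta$, an assumption the paper's proof silently omits.
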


\begin{proof}
Under $H_j$ (i.e., $r(\lambda_j)\ge\delta$), the p-value $p_j$ is super-uniform: for any $\alpha\in[0,1]$,
\begin{equation}
\mathbb{P}(p_j\le \alpha \mid H_j)\le \alpha
\end{equation}
(standard one-sided binomial test; see \citet{angelopoulos2021learn}).
In FST, if any true null is ever rejected, then the \emph{first} true null in the ordered sequence must be rejected at level $\epsilon$.
Therefore, by super-uniformity,
\begin{equation}
\mathbb{P}(\text{FST rejects any true }H_j)\;\le\;\epsilon,
\end{equation}
i.e., FST controls the family-wise error rate (FWER) at level $\epsilon$.
On the complement event (probability at least $1-\epsilon$), every rejected $H_j$ is false, hence $r(\lambda_j)<\delta$ for all rejected indices, and in particular for the selected $\lambda^\star$ (the most aggressive rejected threshold). Thus $\mathbb{P}(r(\lambda^\star)\le \delta)\ge 1-\epsilon$.
\end{proof}

\begin{remark}[Marginal guarantee]
    \Cref{thm:risk_control} provides marginal guarantee (expectation of all prompt distribution), not conditional guarantee (specific for a certain prompt).
\end{remark}

\begin{remark}[General bounded risks]
    If one wishes to allow $L(\cdot,\cdot)\in[0,1]$ (not necessarily $\{0,1\}$), the same proof structure holds by replacing the binomial p-values with any valid super-uniform p-values for the mean-risk null $r(\lambda_j)\ge\delta$ (e.g., Hoeffding-type tests under independence), without changing the FST argument.
\end{remark}

\section{Discussions}\label{sec:discussion}

\paragraph{Self-supervised inner loop at inference.}
A central design choice is that, during deployment (both calibration-set runs and test inference), the inner-loop update uses $C_t=0$ at every non-stopping step.
Formally, updates are applied only while $s_t < \lambda^*$; once $s_t \ge \lambda^*$, the procedure stops and no further gradient step is taken.
This yields a fully self-supervised inference-time update rule, since no external label is required after deployment.
It also improves \textbf{training--inference consistency}: in meta-training, all pre-transition steps (for which $C_t^{\text{true}}=0$) follow the same inner-loop dynamics as inference, while supervision enters only through the outer objective.

\paragraph{Detecting the reasoning breakthrough.}
\label{sec:transition}

For each problem at training time, we assume (after cumulative transformation) that the true step labels $C_t^{\text{true}}$ follow a monotone binary sequence, $[0,0,\ldots,0,1,1,\ldots,1]$, with a single transition point at which the answer first becomes correct.
The outer-loop objective $\sum_t (s_t-C_t)^2$ therefore encourages the probe to localize this transition, assigning low scores before the transition and high scores afterward.

At inference time, we \emph{do not} assume access to $C_t^{\text{true}}$ and we do \emph{not} claim that
``not stopping'' implies $C_t^{\text{true}}=0$.
Instead, we use the pseudo-target $C_t$ to adapt the fast weights toward the instance-specific pre-transition baseline, which empirically acts as a novelty detector for the transition. 
Consequently, the probe accumulates an instance-specific representation of typical pre-transition reasoning patterns.
When a substantive reasoning breakthrough occurs and the embedding $\phi_t$ shifts accordingly, the probe score increases because the new state is no longer well explained by the current adaptation.
Meta-training embeds this behavior into the slow parameters: $W_0$ sets the initialization and $\eta$ controls the adaptation rate to the problem-specific baseline.
Under this view, the $C_t{=}0$ inner loop acts as an implicit novelty detector, where ``novelty'' corresponds to the transition pattern learned during meta-training.
From another perspective, the entire deployment procedure can be treated as a \emph{binary classification task}, hence there is no train-test gap at all.

\paragraph{Online adaptation vs.\ on-policy validity.}
Our method is \emph{online}: during a single reasoning trajectory, the fast weights are updated step-by-step, and the stopping decision is made from the current adapted state.
This describes the \emph{algorithmic form} of the method and does not, by itself, imply a distributional guarantee.
By contrast, \emph{on-policy} is the condition required by conformal/LTT calibration: calibration trajectories and deployment trajectories should be generated under the same effective policy (including search behavior, stopping logic, and update dynamics).
Therefore, ORCA is best described as an online method with on-policy validity guarantees.
If the deployment policy changes, one should re-calibrate under the new policy or apply explicit off-policy correction (e.g., importance weighting); otherwise, exchangeability assumptions can fail.

\section{Additional Experiments}\label{sec:exp_appendix}

\subsection{Ablation Studies}
\label{sec:ablation}

\begin{figure}[ht]
\centering
\vspace{-1pt}
\includegraphics[width=\textwidth]{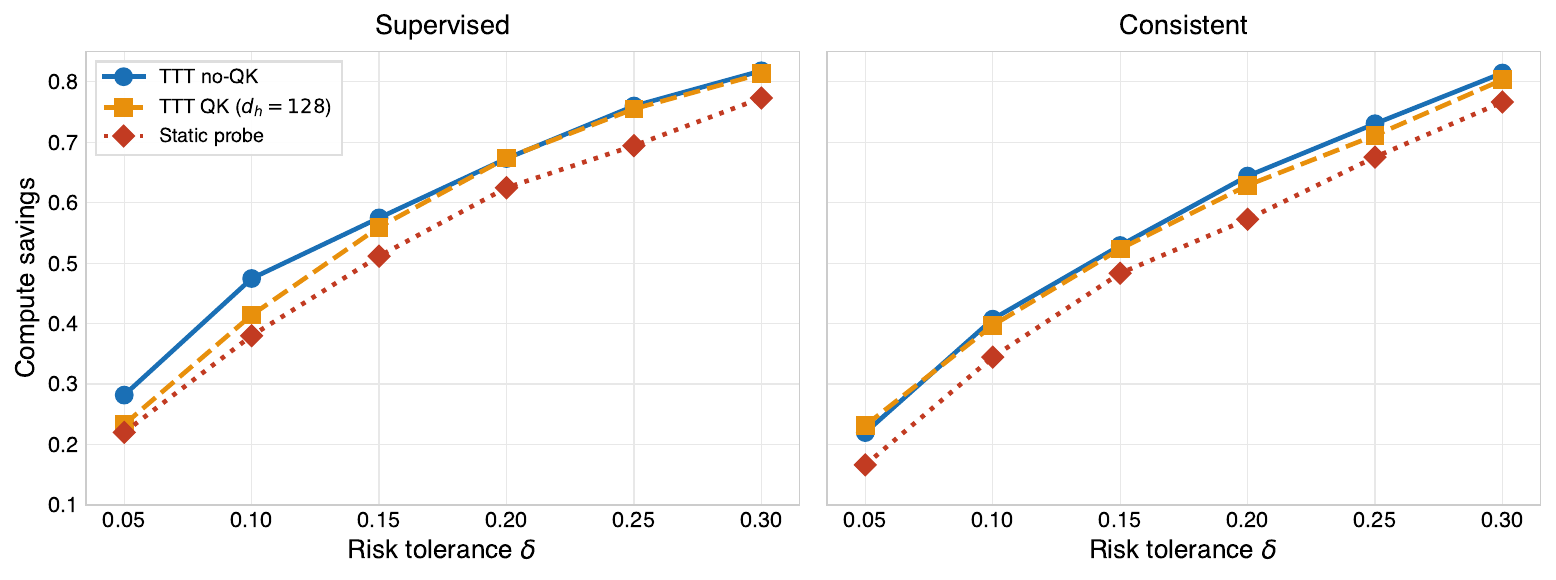}
\vspace{-4pt}
\caption{Compute savings vs.\ risk tolerance $\delta$ for supervised (left) and consistent (right) labels (Qwen2.5-32B). TTT no-QK consistently outperforms the baseline across all risk levels, with the largest gap at low $\delta$.}
\label{fig:risk_compute}
\vspace{-2pt}
\end{figure}

\paragraph{TTT is essential, not just the architecture.}
Table~\ref{tab:ablation_core} ablates the TTT-Probe by comparing against two controls: (1) standard supervised training on the same architecture, and (2) random initialization, with and without online updates.
The ``standard'' variant trains the same no-QK probe architecture via standard Adam optimization, using the same learning rate ($10^{-3}$) and number of epochs (20) as the corresponding TTT variant.
At inference, the standard-trained probe applies a single forward pass per step without online updates.
The ``no meta-training'' variants use randomly initialized weights (with or without online updates); since no training occurs, these results are epoch-independent.

\begin{table}[ht]
\centering
\small
\caption{Core mechanism ablation (supervised, $\delta{=}0.1$). Full TTT achieves the best savings: standard supervised training on the same architecture underperforms the static baseline, and removing either meta-training or online updates substantially degrades performance. Rows marked with $*$ use random initialization.}
\label{tab:ablation_core}
\vspace{2pt}
\begin{tabular}{lccccc}
\toprule
Configuration & Architecture & Training & Online update & Savings & Error \\
\midrule
Full TTT (no-QK) & Linear & Meta-learn & \checkmark & \textbf{.475} & .110 \\
Standard (no-QK) & Linear & Supervised & & .239 & .095 \\
\midrule
Full TTT (QK, $d_h{=}128$) & QK proj. & Meta-learn & \checkmark & .414 & .103 \\
No meta-training$^*$ & QK proj. & None & \checkmark & .254 & .099 \\
No meta + no update$^*$ & QK proj. & None & & .173 & .091 \\
\midrule
Static Probe & PCA+LogReg & Supervised & & .380 & .105 \\
\bottomrule
\end{tabular}
\vspace{-1pt}
\end{table}

\textbf{Standard training is insufficient; TTT meta-learning is the key contributor.}
The standard-trained no-QK probe achieves only 23.9\% savings, substantially below the static baseline (38.0\%).
Without PCA, training a linear model on the full 5{,}120-dimensional embedding space is prone to overfitting on 150K training samples.
The standard-trained probe does not benefit from online adaptation at inference.
Replacing standard training with TTT meta-learning improves savings from 23.9\% to 47.5\% (a 2.0$\times$ increase).
The meta-learning outer loop trains the probe not just to classify correctness, but to produce scores that improve through online updates at inference.
The QK block isolates the contribution of each TTT component: starting from the full TTT-Probe (41.4\%), removing meta-training drops savings to 25.4\%, and further removing online updates drops savings to 17.3\%.
Without meta-learned initialization, online updates alone recover only a fraction of the performance, confirming that both components are essential.

Figure~\ref{fig:risk_compute} visualizes the full risk--savings tradeoff in the in-distribution setting.
The gap between TTT-Probe and the static baseline is most pronounced in the moderate-risk regime ($\delta \in [0.05, 0.2]$), where the online adaptation provides the largest marginal benefit.
At higher $\delta$, all methods converge as most steps can be skipped regardless of probe quality.

Together, meta-training and online updates are complementary: meta-training provides the foundation for calibration quality, while online updates enable instance-level adaptation.

\paragraph{Architecture variants.}

Table~\ref{tab:ablation_arch} evaluates alternative probe designs.
QK architecture variants achieve competitive in-distribution savings (0.41--0.45). LayerNorm, residual, and shared QK variants reach 0.449--0.451, approaching the no-QK probe (0.475).

\begin{table}[ht]
\centering
\vspace{-2pt}
\caption{Architecture ablation (supervised, $d_h{=}128$, $\delta{=}0.1$).}
\label{tab:ablation_arch}
\vspace{1pt}
\begin{tabular}{lccc ccccc}
\toprule
& & & \multicolumn{5}{c}{OOD Savings} \\
\cmidrule(lr){4-8}
Variant & Sav. & Err. & MATH & GPQA & A'24 & A'25 & A'26 \\
\midrule
QK ($d_h{=}128$) & .414 & .103 & .670 & .665 & .295 & .258 & .134 \\
+ LayerNorm & .451 & .095 & .697 & \textbf{.726} & .265 & .256 & .144 \\
+ LN + Residual & .450 & .094 & .697 & \textbf{.726} & .265 & .256 & .144 \\
+ Shared QK & .449 & .094 & .698 & .724 & .264 & .256 & .144 \\
+ Learnable $\eta$ & .421 & .109 & .679 & .656 & \textbf{.364} & .260 & .104 \\
+ MLP (2-layer) & .441 & .093 & \textbf{.717} & .633 & .258 & .231 & \textbf{.325} \\
\midrule
no-QK (ep20) & \textbf{.475} & .110 & .637 & .715 & .293 & \textbf{.265} & .198 \\
\bottomrule
\end{tabular}
\end{table}

The OOD results reveal complementary strengths across architectures.
The MLP variant achieves the highest MATH-500 savings (0.717) and AIME'26 (0.325), while LayerNorm variants lead on GPQA (0.726).
The no-QK probe is strongest on AIME'25 (0.265), while the learnable-$\eta$ variant achieves the best AIME'24 savings (0.364).

The no-QK architecture remains the recommended default due to its simplicity (only $d_\phi + 1$ parameters), stability across epochs, and consistently strong performance.
QK variants with LayerNorm offer a viable alternative when MATH-style OOD generalization is prioritized.

\paragraph{Projection dimension.}

\begin{table}[ht]
\centering
\small
\vspace{-1pt}
\caption{Effect of QK projection dimension (supervised, $\delta{=}0.1$).}
\label{tab:dhidden}
\vspace{1pt}
\begin{tabular}{lccc}
\toprule
$d_h$ & Parameters & Savings & Error \\
\midrule
32 & 328K & .440 & .105 \\
64 & 656K & .439 & .106 \\
128 & 1.3M & .414 & .103 \\
256 & 2.6M & .408 & .104 \\
512 & 5.2M & .398 & .099 \\
\midrule
no-QK & 5.1K & \textbf{.475} & .110 \\
\bottomrule
\end{tabular}
\vspace{-1pt}
\end{table}

Table~\ref{tab:dhidden} evaluates the effect of QK projection dimension.
The smallest QK dimension ($d_h{=}32$) achieves the highest QK savings (0.440) with the fewest parameters (328K), while larger dimensions show diminishing returns.
The no-QK variant, with only 5.1K parameters, outperforms all QK dimensions in savings.
This suggests that for the early-stopping task, a high-capacity projection is unnecessary; the raw hidden states already contain sufficient signal for adaptive confidence estimation.

\paragraph{Inner learning rate sensitivity.}
The no-QK probe is remarkably robust to the inner learning rate.
Across a 100$\times$ range (lr $\in \{0.001, 0.005, 0.01, 0.05, 0.1\}$), supervised savings vary only between 0.461 and 0.463, a fluctuation of less than 0.5\%.
In consistent mode, the invariance is even more extreme: all five learning rates produce \emph{identical} savings of 0.418.
This robustness eliminates a key hyperparameter from practical deployment.

\subsection{Step-Level vs.\ Token-Level Savings}
\label{sec:step_vs_token}

\begin{table}[ht]
\centering
\small
\vspace{-2pt}
\caption{Step-level vs.\ token-level savings ($\delta{=}0.1$, supervised). The two metrics are highly consistent across models.}
\label{tab:token}
\vspace{2pt}
\begin{tabular}{l ccc}
\toprule
Configuration & Step Sav. & Token Sav. & $\Delta$ \\
\midrule
\multicolumn{4}{l}{\textit{Qwen2.5-32B}} \\
Static Probe & .377 & .377 & .000 \\
TTT no-QK & .475 & .471 & $-$.004 \\
TTT QK ($d_h{=}128$) & .414 & .414 & .000 \\
\midrule
\multicolumn{4}{l}{\textit{QwQ-32B}} \\
Static Probe & .293 & .296 & +.003 \\
TTT no-QK & .394 & .397 & +.003 \\
TTT QK ($d_h{=}128$) & .376 & .380 & +.003 \\
\midrule
\multicolumn{4}{l}{\textit{Llama-3.3-70B}} \\
Static Probe & .352 & .372 & +.020 \\
TTT no-QK & .424 & .438 & +.014 \\
TTT QK ($d_h{=}128$) & .378 & .399 & +.021 \\
\bottomrule
\end{tabular}
\end{table}

We verify that step-level savings translate to comparable token-level savings.
Table~\ref{tab:token} compares the two metrics for representative configurations at $\delta{=}0.1$.
On Qwen and QwQ, step-level and token-level savings differ by less than 0.5 percentage points, confirming that reasoning steps are roughly uniform in length.
On Llama, token-level savings are 1--2 percentage points higher than step-level savings, indicating that later reasoning steps tend to be longer; early stopping thus saves proportionally more tokens than steps.
Given the close agreement, we report step-level savings throughout this paper.

\subsection{Epoch Selection}
\label{sec:epoch}

Table~\ref{tab:epoch} shows in-distribution savings at selected epochs.
The no-QK probe is stable across epochs due to its small parameter count (5.1K), so we select epoch~20 where savings are near-peak.
The QK variant ($d_h{=}128$, 1.3M parameters) peaks at epoch~10 and degrades at later epochs. 
We therefore use epoch~10 for all QK variants.

\begin{table}[ht]
\centering
\small
\caption{Savings at selected epochs (supervised, $\delta{=}0.1$). The no-QK probe is stable; QK peaks early and overfits.}
\label{tab:epoch}
\begin{tabular}{l ccccc}
\toprule
Epoch & 10 & 20 & 30 & 40 & 50 \\
\midrule
no-QK & .443 & \textbf{.475} & .471 & .476 & .464 \\
QK ($d_h{=}128$) & \textbf{.414} & .387 & .381 & .357 & .324 \\
\bottomrule
\end{tabular}
\end{table}

\begin{figure}[t]
\centering
\includegraphics[width=0.45\textwidth]{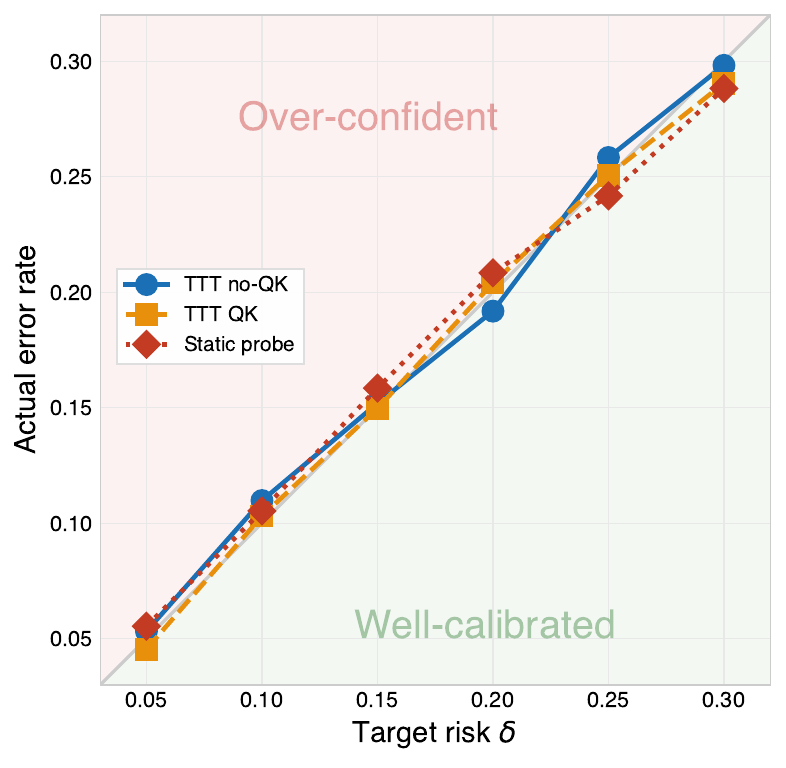}
\caption{Actual error rate vs.\ target risk $\delta$ (supervised, Qwen2.5-32B). All methods track the diagonal, confirming valid risk control. Points below the diagonal satisfy the LTT guarantee.}
\label{fig:calibration}
\end{figure}

\subsection{Calibration Quality}
\label{sec:calibration}

LTT calibration guarantees that the selected decision rule has deployment risk at most $\delta$ with probability at least $1-\epsilon$.
Figure~\ref{fig:calibration} validates this empirically by plotting the actual test-set error rate against the target $\delta$.

All three methods closely track the $y=x$ diagonal, confirming that the LTT guarantee holds empirically.
At low $\delta$ (0.05 to 0.15), all probes are slightly conservative, with actual error rates below the target risk level.
This is expected since LTT calibration optimizes for finite-sample validity.
Notably, TTT-Probe achieves higher savings (Figure~\ref{fig:risk_compute}) while maintaining the same calibration quality as the static baseline, indicating that online adaptation improves efficiency without degrading risk control.

Figure~\ref{fig:savings_dist} shows the distribution of per-problem savings at $\delta{=}0.1$.
All three methods exhibit high variance, with a significant mass near zero (problems where early stopping is not triggered) and near one (problems stopped very early).
The TTT no-QK distribution has a higher mean (0.475) and median (0.444) than the static baseline (mean 0.377, median 0.313), confirming that the improvement is broadly distributed across problems rather than driven by a few outliers.

\begin{figure}[t]
\centering
\includegraphics[width=0.5\textwidth]{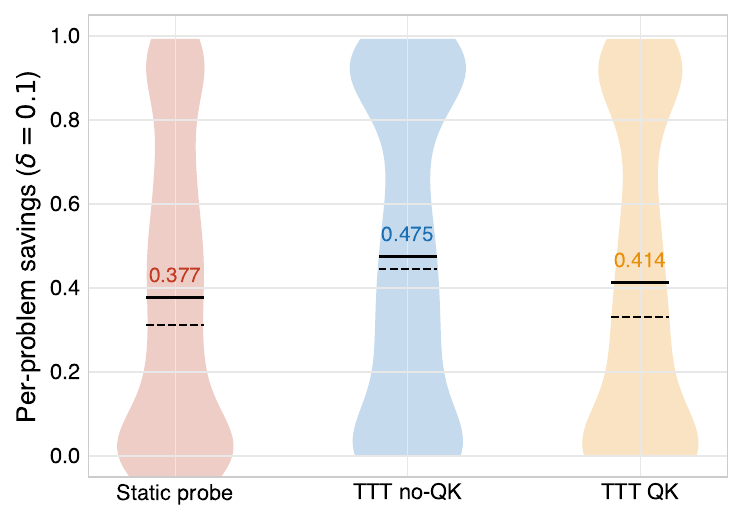}
\caption{Distribution of per-problem savings at $\delta{=}0.1$ (supervised, Qwen2.5-32B, 902 problems). Solid lines: mean; dashed lines: median. TTT no-QK shifts the distribution toward higher savings across the full range.}
\label{fig:savings_dist}
\end{figure}

\subsection{Score Trajectory Analysis}
\label{sec:trajectory}

Figure~\ref{fig:trajectory} compares the probe score trajectories on a representative test problem.
The green vertical line marks the first correct reasoning step.

The static probe score rises gradually after the first correct step but remains below its threshold (0.77) throughout the entire trajectory.
As a result, reasoning runs to completion with zero savings.
The TTT no-QK probe starts at a higher score ($\sim$0.65) due to meta-learned initialization and adapts online via the $C_t{=}0$ update rule.
After the first correct step, the score increases and crosses the calibrated threshold (0.83) at step~22, stopping reasoning 16 steps early and saving 41\% of compute.
This example illustrates the two components at work: meta-training provides a good initialization, and online updates adapt the probe to the specific problem instance.

\begin{figure}[t]
\centering
\includegraphics[width=0.7\textwidth]{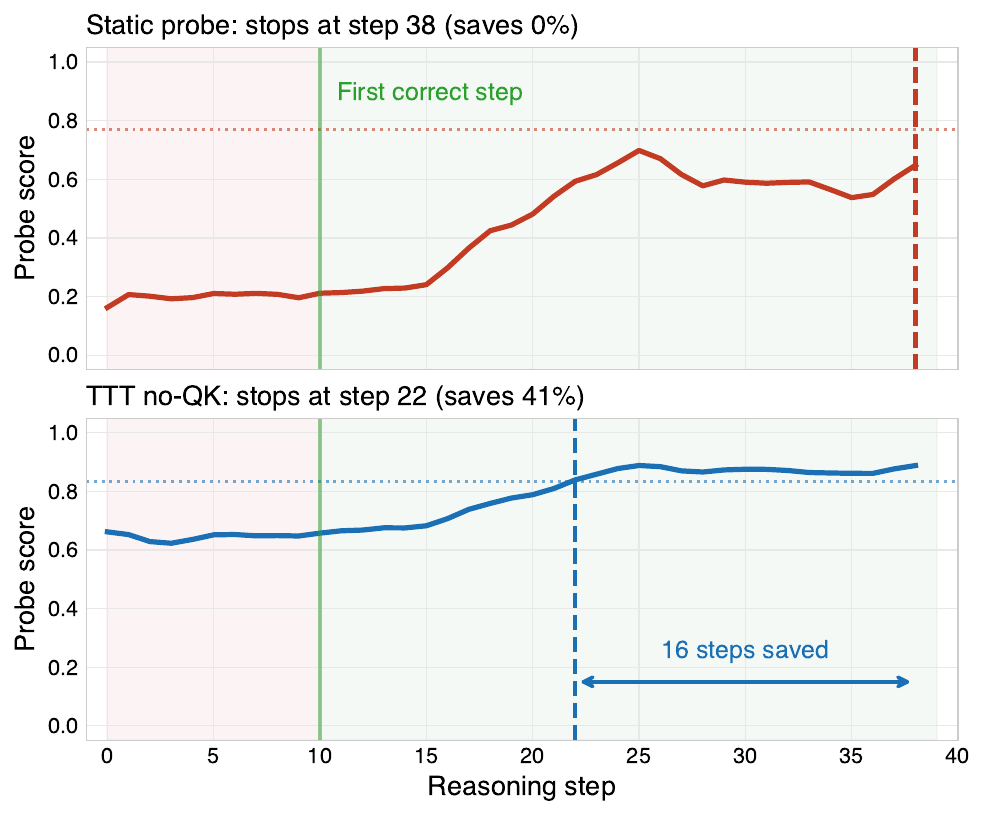}
\caption{Probe score trajectories for a test problem (Qwen2.5-32B, $\delta{=}0.1$). The green line marks the first correct step. The static probe (top) never crosses its threshold and saves 0\%. The TTT no-QK probe (bottom) crosses the threshold at step~22 and saves 41\%.}
\label{fig:trajectory}
\end{figure}

\subsection{Effect of Training and Calibration Set Sizes}
\label{sec:data_scaling}

We study how performance scales with the amount of available data, which is particularly relevant for deployment settings where labeled data is scarce.
Following the setup of \Cref{tab:main} (TTT no-QK, supervised, $\delta{=}0.1$), we vary the training set size $n_{\text{train}}$ with the calibration set fixed at $n_{\text{cal}}{=}1$K, and vary the calibration set size $n_{\text{cal}}$ with the training set fixed at $n_{\text{train}}{=}3$K.
Results are reported on the in-distribution test split.

\begin{table}[ht]
\centering
\small
\caption{Effect of training set size (top, $n_{\text{cal}}{=}1$K fixed) and calibration set size (bottom, $n_{\text{train}}{=}3$K fixed) on in-distribution performance (TTT no-QK, supervised, $\delta{=}0.1$). Savings grow monotonically with $n_{\text{train}}$; small calibration sets yield more conservative thresholds, and risk control is maintained at every size.}
\label{tab:data_scaling}
\vspace{2pt}
\begin{tabular}{l cccccc}
\toprule
$n_{\text{train}}$ & 500 & 1000 & 1500 & 2000 & 2500 & 3000 \\
\midrule
Savings & .385 & .424 & .438 & .450 & .458 & \textbf{.475} \\
Error   & .092 & .093 & .101 & .102 & .111 & .110 \\
\bottomrule
\end{tabular}
\\[8pt]
\begin{tabular}{l ccccc}
\toprule
$n_{\text{cal}}$ & 50 & 100 & 200 & 500 & 1000 \\
\midrule
Savings & .125 & .396 & .367 & .399 & \textbf{.475} \\
Error   & .024 & .091 & .082 & .091 & .110 \\
\bottomrule
\end{tabular}
\end{table}

\paragraph{Training set scaling.}
As shown in \Cref{tab:data_scaling} (top), savings increase monotonically with $n_{\text{train}}$ while the realized error stays close to the target level $\delta{=}0.1$ at every size.
Notably, with only $n_{\text{train}}{=}500$ the TTT-Probe already matches the Static Probe baseline trained on the full data (.385 vs.\ .380), and the full 3K training set improves savings to .475.
The method therefore does not rely on a large training set to be competitive; additional data converts directly into additional savings.

\paragraph{Calibration set scaling.}
The distribution-free FWER guarantee of LTT holds at any calibration set size.
As shown in \Cref{tab:data_scaling} (bottom), with $n_{\text{cal}}{=}50$ the binomial test naturally selects a more conservative threshold, yielding an error rate of .024 (well below the target $\delta{=}0.1$) at the cost of savings (.125).
From $n_{\text{cal}}{\geq}100$ onward, savings stabilize near .37--.40, reaching .475 at the full 1K.
The method does not fail in low-data deployment but becomes appropriately conservative.

\subsection{Stability Across Repeated Runs}
\label{sec:variance}

To verify that our results are not artifacts of a single run, we repeat the full pipeline with 10 random seeds under the setup of \Cref{tab:ood} ($\delta{=}0.1$): each seed trains its own probe and runs an independent LTT calibration.
\Cref{tab:variance} reports the mean and standard deviation on the in-distribution test split and the five OOD benchmarks.

\begin{table}[ht]
\centering
\small
\caption{Mean $\pm$ std over 10 random seeds with independent LTT calibration per seed (TTT no-QK, supervised, $\delta{=}0.1$). Variance is small on all benchmarks.}
\label{tab:variance}
\vspace{2pt}
\begin{tabular}{l cc}
\toprule
Dataset & Savings & Error \\
\midrule
In-distribution & .474 $\pm$ .004 & .112 $\pm$ .003 \\
MATH-500        & .631 $\pm$ .005 & .022 $\pm$ .001 \\
GPQA-Diamond    & .715 $\pm$ .002 & .300 $\pm$ .000 \\
AIME'24         & .292 $\pm$ .004 & .150 $\pm$ .000 \\
AIME'25         & .267 $\pm$ .005 & .056 $\pm$ .000 \\
AIME'26         & .199 $\pm$ .002 & .050 $\pm$ .000 \\
\bottomrule
\end{tabular}
\end{table}

Variance is small on all benchmarks: the standard deviation of savings is at most .005, the error rate is constant across seeds on GPQA-Diamond and all three AIME benchmarks, and the single-run numbers in \Cref{tab:ood} are consistent with these results.
Both the savings and the error rates of our method are thus stable under the randomness of probe training and threshold calibration.

\end{document}